\def\[#1\]{\begin{align}#1\end{align}}
\def\(#1\){\begin{align*}#1\end{align*}}
\def\argmax{\operatornamewithlimits{arg\,max}}
\def\argmin{\operatornamewithlimits{arg\,min}}
\newcommand{\bprf}{\begin{proof}}
\newcommand{\eprf}{\end{proof}}
\newcommand{\blem}{\begin{lemma}}
\newcommand{\elem}{\end{lemma}}
\newcommand{\eqdef}{\triangleq}
\newcommand{\bP}{\mathbb{P}}
\newcommand{\bE}{\mathbb{E}}
\newcommand{\sF}{\mathcal{F}}
\newcommand{\TV}{\mathsf{TV}}
\newcommand{\tTV}{\widetilde{\mathsf{TV}}}
\newcommand{\bR}{\mathbb{R}}
\newcommand{\GG}{\mathcal{G}}
\newcommand{\sH}{\mathcal{H}}
\newtheorem{remark}{\textbf{Remark}}
\newtheorem{theorem}{Theorem}
\newtheorem{corollary}{Corollary}
\newtheorem{lemma}{\textbf{Lemma}}
\begin{document}

\title{Robust Estimation for Nonparametric Families via Generative Adversarial Networks}

\author{Banghua Zhu, Jiantao Jiao, Michael I. Jordan\thanks{Banghua Zhu is with the Department of Electrical Engineering and Computer Sciences, University of California, Berkeley. Jiantao Jiao is with the Department of Electrical Engineering and Computer Sciences and the Department of Statistics, University of California, Berkeley. Michael I. Jordan is with the Department of Electrical Engineering and Computer Sciences and the Department of Statistics, University of California, Berkeley.  Email: \{banghua, jiantao,michael\_jordan\}@berkeley.edu.}}

\maketitle

\begin{abstract}
We provide a general framework for designing Generative Adversarial Networks (GANs) to solve high-dimensional robust statistics problems, which aim at estimating unknown parameter of the true distribution given adversarially corrupted samples. Prior work~\cite{gao2018robust, gao2019generative} focus on the  problem of robust mean and covariance estimation when the true distribution lies in the family of Gaussian distributions or elliptical distributions, and analyze depth or scoring rule based GAN losses for the problem. Our work extend these to robust mean estimation, second-moment estimation, and robust linear regression when the true distribution only has bounded Orlicz norms, which includes the broad family of sub-Gaussian, sub-exponential and bounded moment distributions. We also provide a different set of sufficient conditions for the GAN loss to work: we only require its induced distance function to be a cumulative density function of some light-tailed distribution, which is easily satisfied by neural networks with sigmoid activation. In terms of techniques, our proposed GAN losses can be viewed as a smoothed and generalized Kolmogorov-Smirnov distance, which overcomes the computational intractability of the original Kolmogorov-Smirnov distance used in the weaken the distance approach in~\cite{zhu2019generalized}. 
\end{abstract}

\section{Introduction}

High-dimensional robust statistics focuses on estimating unknown parameters of high-dimensional distributions given corrupted samples. Formally, assume that the true distribution $p^\star$ with dimension $d$ is altered to some corrupted distribution $p$ such that $\TV(p^\star, p)\leq \epsilon$, where $\TV$ is the total variation distance. One takes $n$ samples from $p$ to form the empirical distribution $\hat p_n$. Given the empirical distribution and a loss function $L$, we aim to output a parameter $\hat \theta(\hat p_n)$ such that $L(p^\star, \hat \theta(\hat p_n))$ is small. 
As a concrete example in the setting of robust mean estimation, one might take the loss function to be $L(p^\star, \hat \theta(\hat p_n)) = \|\mathbb{E}_{p^\star}[X] -\hat \theta(\hat p_n)\|_2$, which is the $\ell_2$ distance between the true mean and an algorithm's output. 

To achieve robustness, it is necessary to impose nontrivial assumptions on the true distribution $p^\star$. Classical asymptotic statistical theory  focused on minimum-distance functionals for  Gaussian or elliptical distributions~\cite{huber1973robust,donoho1988automatic,adrover2002projection,huber2011robust,chen2002influence,  gao2017robust, gao2019generative, zhu2020does}. Recent work has instead focused on computationally efficient algorithms that are based on weaker assumptions regarding the tail behavior of distributions~\cite{diakonikolas2017being, steinhardt2017resilience, steinhardt2017certified, diakonikolas2018learning, diakonikolas2018sever,  liu2018high,  chen2018robust, bateni2019minimax, lecue2019robust, zhu2019deconstructing, zhu2019generalized}.
Another recent line of work is more algorithmic, focusing on generative adversarial networks (GANs) as a promising framework for developing robust estimators. Theoretical work has shown that when the true distribution lies in the family of Gaussian or elliptical distributions, one can design specific GAN losses to achieve theoretical guarantee on robust mean estimation and robust covariance estimation~\cite{gao2018robust, gao2019generative}.

In this work, we take the theoretical study of GANs for robust estimation further, moving beyond strong Gaussian assumptions and providing concrete sufficient conditions for robustness. Specifically, our main contributions are as follows:
\begin{itemize}
    \item We extend the design of GANs for robust estimation from Gaussian and elliptical distribution in~\cite{gao2018robust, gao2019generative} to a rich family of nonparametric classes of distributions, including the family of sub-Gaussian, sub-exponential and bounded moment distributions.
    \item We identify sufficient conditions for robust estimation which pinpoint a broad family of GANs that provably succeed in robust mean estimation, robust covariance estimation and robust linear regression.
    \item  We provide computationally tractable methods for the ``weaken the distance'' approach in~\cite{zhu2019generalized} by smoothing the Kolmogorov-Smirnov distance, which appears to be difficult to optimize. 
\end{itemize}

\section{Preliminaries}

We begin by introducing basic notation.  A function $\psi: [0, +\infty) \mapsto [0, +\infty)$ is called an \emph{Orlicz function} if $\psi$ is convex, non-decreasing, and satisfies $\psi(0) = 0$, $\psi(x) \to \infty$ as $x\to \infty$. For a given Orlicz function $\psi$, the Orlicz norm of a random variable $X$ is defined as
$  \| X \|_{\psi} \triangleq \inf\left \{ t>0 : \bE_p\left [\psi\left(|X|/t\right )\right ] \leq 1\right \}.$ {We use $\|A\|_2 = \sup_{v\neq 0} \|Av\|_2/\|v\|_2$ to denote the spectral norm of the matrix $A$.}
For univariate random variables $X$ and $Y$, we say that
$Y$ \emph{stochastically dominates} $X$ (to first order) 
if $\bP(X \leq t)\geq  \bP(Y \leq t)$ for all $t \in \bR$ \cite{marshall1979inequalities}. We define the generalized inverse of a non-decreasing function $\psi$ as 
$\psi^{-1}(y) = \inf\{x \mid \psi(x) > y\}$. We say a distribution $r$ is an $\epsilon$-deletion of another distribution $p$ if $r\leq \frac{p}{1-\eta}$. The inequality can be formally understood as $\frac{dr}{dp}\leq \frac{1}{1-\eta}$, where $\frac{dr}{dp}$ is the Radon-Nikodym derivative, which can also be understood as $r(A)\leq \frac{p(A)}{1-\epsilon}$ for any set $A$; an equivalent characterization is that
$r$ can be obtained from $p$ by conditioning on an event $E$ of probability $1-\epsilon$.
\subsection{Tasks considered in Robust Estimation}
We focus on three tasks in robust estimation---robust mean estimation, robust second-moment estimation, and robust linear regression---throughout this paper. For robust mean estimation, we take the loss function to be $L(p^\star, \hat \theta(\hat p_n)) = \|\mathbb{E}_{p^\star}[X] -\hat \theta(\hat p_n)\|_2$; for second-moment estimation, we use the loss function  $L(p^\star, \hat \theta(\hat p_n)) = \|\mathbb{E}_{p^\star}[XX^\top] -\hat \theta(\hat p_n)\|_2$, where the algorithm output $\hat\theta(\hat p_n)$ is a matrix; and for linear regression, we take the loss function to be the excess predictive loss, $L(p^\star, \hat \theta(\hat p_n)) = \mathbb{E}_{p^\star}[(Y-X^\top\hat \theta(\hat p_n))^2 -(Y-X^\top\theta^{\star}(p^\star))^2 ]$, where $\theta^\star(p^\star)=\argmin_{\theta}\mathbb{E}_{p^\star}[(Y-X^\top\theta)^2].$

When the output $\hat\theta$ is a parameter of a distribution, the losses can alternatively be written as distances between distributions. For mean estimation, the loss can be written as $L(p^\star, q) = \|\mathbb{E}_{p^\star}[X] -\mathbb{E}_{q}[X]\|_2$; for second-moment estimation,  the loss can be written as $L(p^\star, q) = \|\mathbb{E}_{p^\star}[XX^\top] -\mathbb{E}_{q}[XX^\top] \|_2$; and for linear regression, the loss function can be written as $L(p^\star, q) = \mathbb{E}_{p^\star}[(Y-X^\top \theta^\star(q))^2 -(Y-X^\top\theta^{\star}(p^\star))^2 ]$.

Note that the losses for mean and second-moment estimation admit a pseudonorm representation, $W_\sF(p, q)=\sup_{f\in\sF}(\mathbb{E}_p[f(X)]-\mathbb{E}_q[f(X)])$. In the case of mean estimation, we have $\mathcal{F}_{\mathsf{mean}}=\{f(x)=v^\top x\mid v\in\mathbb{R}^d, \|v\|_2\leq 1\}$, and for second-moment estimation, we have  $\mathcal{F}_{\mathsf{sec}}=\{f(x)=\xi (v^\top x)^2\mid v\in\mathbb{R}^d, \|v\|_2\leq 1, \xi\in\{\pm 1 \}\}$. When we state our main theorem we will assume that the loss admits this pseudonorm representation, and showcase how to deal with linear regression as an exception.

\subsection{Generalized Resilience and Minimum Distance Functional}

It is shown in~\cite[Section 3.2]{zhu2019generalized} that when the true distribution lies in a ``generalized resilience'' family, it is possible to design algorithms with robustness guarantees.  In particular, when the loss function takes the form of a pseudonorm $W_\sF(p, q)$, the generalized resilience set can be expressed simply as: 
\begin{align}
\mathcal{G}_{\mathcal{F}}(\rho) = & \{p \mid \bE_{r}[f(X)] - \bE_{p}[f(X)] \leq \rho(\epsilon), \forall \epsilon\in[0, 1), \nonumber \\ 
& \quad \forall r\leq \frac{p}{1-\epsilon}, f\in\mathcal{F}\}.\label{eqn.def_G_TV_WF}
\end{align}

For convenience of notation, we use $\mathcal{G}_{\mathsf{mean}}$ to denote $\mathcal{G}_{\mathcal{F}_\mathsf{mean}}$, and $\mathcal{G}_{\mathsf{sec}}$ to denote $\mathcal{G}_{\mathcal{F}_\mathsf{sec}}$. 
It is shown in~\cite{zhu2019generalized}  that $\mathcal{G}_\mathsf{mean}(\epsilon\psi^{-1}(\epsilon))$ is a superset of the set of bounded Orlicz norm distributions, $\{X\sim p\mid \sup_{\|v\|_2\leq 1}\|v^\top X\|_\psi\leq 1\}$, which implies a bounded $k$-th moment  when $\psi(x)=x^k$, sub-exponentiality when $\psi(x)=\exp(x)$, and sub-Gaussianity when $\psi=\exp(x^2)$. Similarly, $\mathcal{G}_\mathsf{sec}(\epsilon\psi^{-1}(\epsilon))$ is a superset of $\{X\sim p\mid \sup_{\|v\|_2\leq 1}\|(v^\top X)^2\|_\psi\leq 1\}$.

The generalized resilience set takes a slightly more complicated form for linear regression. However, it is also a superset of some bounded Orlicz norm family. We use $\mathcal{G}_{\mathsf{reg}}(\psi)$ to denote the following family:
 \begin{align} 
  \GG_{\mathsf{reg}}(\psi) = \Bigg\{p \mid &\bE_{p}\bigg[\psi \bigg(\frac{(v^{\top}X)^2}{\sigma_1^2 \bE_{p}[(v^{\top}X)^2]}\bigg)\bigg] \leq 1 \text{ for all } v \in \bR^d, \nonumber \\ 
  & \text{ and }\bE_{p}\left[\psi \left(\frac{(Y - X^\top \theta^*(p))^2}{\sigma_2^2}\right)\right] \leq 1 \Bigg\}.\label{eqn.G_linreg}
    \end{align}

Working under the restrictive assumption that the true distribution lies in $\mathcal{G}$, \cite{zhu2019generalized} study a projection algorithm and show that it achieves polynomial sample complexity for robust estimation. The algorithm  projects the  corrupted empirical distribution $\hat p_n$ onto $\mathcal{G}$ under the \emph{generalized Kolmogorov-Smirnov distance}  $\TV_\mathcal{H}$, defined as 
\begin{equation}
\TV_{\sH}(p, q) \eqdef \sup_{f \in \sH, t \in \bR} |\bP_p(f(X) \geq t) - \bP_q(f(X) \geq t)|.
\end{equation}

The distance $\TV_\mathcal{H}$ is smaller than the total variation distance $\TV$ because it takes a supremum over only the events defined by threshold 
functions in $\sH$, while $\TV$ takes the same supremum over all measurable events. The projection algorithm is analyzed for Guassian mean estimation in~\cite{donoho1988pathologies}, with $\mathcal{H} = \{ v^{\top}x \mid v\in\bR^d\}$, and extended to bounded Orlicz norm family for robust second-moment estimation, robust linear regression in~\cite{zhu2019generalized}.
Unfortunately, however, it is difficult to compute the $\TV_{\mathcal{H}}$ projection in practice due to the lack of differentiability of commonly used losses. We show how to circumvent this issue in the following section.

\section{A Unified Framework for Design of GANs}

In this section we show that a large family of smoothed loss function can be used as projection functionals and achieve favorable finite sample bounds. We define the 
\emph{smoothed generalized Kolmogorov-Smirnov distance}  as
\begin{align}\label{eqn.def_smoothed_KS}
    \tTV_\mathcal{F}(p, q) =  \sup_{ f\in\sF, t\in\bR} |\bE_p[T(f(X)+t)] - \bE_q[T(f(X)+t)]|,
\end{align}
where we assume that $T(x)$, after an affine transformation $aT(x)+b$, can be written as the cumulative distribution function (CDF) of some random variable $Z$; i.e., $aT(-\infty)+b = 0, aT(+\infty)+b=1 $ and $T$ is right-continuous.  An example is the sigmoid function $T(x) = 1/(1+e^{-x})$. 
When $Z = 0$ almost surely, the smoothed distance $\tTV_\sF$ reduces to the case of generalized KS distance. The key observation is there is a generalization of the \emph{mean cross} lemma in~\cite[Lemma 3.3]{zhu2019generalized} that works for the 
\emph{smoothed generalized Kolmogorov-Smirnov distance}.
\begin{lemma}[Closeness in smoothed KS distance implies mean cross]\label{lem.smoothed_KS}
Assume for two distribution $p, q$, 
\begin{align}
    \sup_{t\in\bR} |\bE_p[T(X+t)] - \bE_q[T(X+t)] |\leq \epsilon,
\end{align}
where we assume that there exist $a, b\in\mathbb{R}$ such that $aT(x)+b$ can be written as the CDF of some random variable $Z$. Assume the distribution of $Z$ is inside the resilience family of $\GG_{\mathsf{mean}}(\rho_Z)$. Then there exist $r_p \leq \frac{p}{1-|a|\epsilon}$ and $r_q\leq \frac{q}{1-|a|\epsilon}$ such that 
\begin{align}
    \bE_{r_q}[X] -  \bE_{r_p}[X] \leq 2\rho_Z(|a|\epsilon). 
\end{align}
\end{lemma}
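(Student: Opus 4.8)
The plan is to reduce the smoothed hypothesis to an ordinary Kolmogorov--Smirnov bound between two auxiliary laws, and then to mimic the mean-cross argument of \cite[Lemma 3.3]{zhu2019generalized} on a product space carrying an independent copy of $Z$, using the resilience of $Z$ in place of that of $p$ or $q$. For the reduction: writing $F_Z = aT+b$ for the claimed CDF of $Z$ and taking $Z\sim F_Z$ independent of $X$, one has $\bE_\mu[T(X+t)] = \tfrac1a\bE_\mu[F_Z(X+t)] - \tfrac ba = \tfrac1a\bP(Z-X\le t) - \tfrac ba$ for any law $\mu$ on $\bR$, so the hypothesis is exactly $\sup_t|\bP(Z-X_p\le t)-\bP(Z-X_q\le t)| \le |a|\epsilon =: \delta$; that is, the laws of $W_p := X_p-Z$ and $W_q := X_q-Z$ are within $\delta$ in Kolmogorov distance. (The sign of $a$ is absorbed into $|a|$, and reflection invariance of the Kolmogorov distance lets us pass from $Z-X$ to $X-Z$.)

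Next I would lift everything to the products $\tilde p = p\otimes\mathrm{Law}(Z)$ and $\tilde q = q\otimes\mathrm{Law}(Z)$ on pairs $(X,Z)$ and set $W := X-Z$; under $\tilde p$ (resp.\ $\tilde q$) the law of $W$ has CDF $F_{W_p}$ (resp.\ $F_{W_q}$) with $\|F_{W_p}-F_{W_q}\|_\infty\le\delta$ by the previous step. Let $\tilde r_q$ be $\tilde q$ conditioned on the event that $W$ falls in its lowest $(1-\delta)$ quantile mass (splitting the atom at the $(1-\delta)$-quantile of $W_q$ with an auxiliary coin so that exactly mass $1-\delta$ is kept), and symmetrically let $\tilde r_p$ be $\tilde p$ conditioned on $W$ falling in its highest $(1-\delta)$ quantile mass. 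Each is a conditioning on an event of probability $1-\delta$, so $\tilde r_p\le\tilde p/(1-\delta)$ and $\tilde r_q\le\tilde q/(1-\delta)$; let $r_p, r_q$ be their $X$-marginals. Marginalizing preserves the domination, so $r_p\le p/(1-\delta)$, $r_q\le q/(1-\delta)$ as required, and likewise the $Z$-marginals of $\tilde r_p,\tilde r_q$ are $\delta$-deletions of $\mathrm{Law}(Z)$; moreover $\bE_{r_q}[X] = \bE_{\tilde r_q}[W] + \bE_{\tilde r_q}[Z]$, and similarly for $p$.

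It then remains to bound the two pieces. For the $W$-piece, the monotone (quantile) representations give $\bE_{\tilde r_q}[W] = \frac1{1-\delta}\int_0^{1-\delta}F_{W_q}^{-1}(u)\,\dee u$ and $\bE_{\tilde r_p}[W] = \frac1{1-\delta}\int_\delta^1 F_{W_p}^{-1}(u)\,\dee u$; substituting $u\mapsto u-\delta$ in the first integral and using that $\|F_{W_p}-F_{W_q}\|_\infty\le\delta$ forces $F_{W_q}^{-1}(u-\delta)\le F_{W_p}^{-1}(u)$ for $u\in(\delta,1)$, one gets $\bE_{\tilde r_q}[W]\le\bE_{\tilde r_p}[W]$. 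For the $Z$-piece, since $\mathrm{Law}(Z)\in\GG_{\mathsf{mean}}(\rho_Z)$ and $\mathcal F_{\mathsf{mean}}$ contains both $z\mapsto z$ and $z\mapsto-z$, the two $\delta$-deletions satisfy $\bE_{\tilde r_q}[Z]\le\bE[Z]+\rho_Z(\delta)$ and $\bE_{\tilde r_p}[Z]\ge\bE[Z]-\rho_Z(\delta)$. Adding the two estimates, $\bE_{r_q}[X]-\bE_{r_p}[X] = (\bE_{\tilde r_q}[W]-\bE_{\tilde r_p}[W]) + (\bE_{\tilde r_q}[Z]-\bE_{\tilde r_p}[Z]) \le 0 + 2\rho_Z(\delta)$, which is the claim with $\delta=|a|\epsilon$.

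The main obstacle is conceptual rather than computational: one cannot simply invoke \cite[Lemma 3.3]{zhu2019generalized} for the laws of $X_p-Z$ and $X_q-Z$, because a $\delta$-deletion of $\mathrm{Law}(X_p-Z)$ need not descend from a $\delta$-deletion of $p$. Passing to the product space and deleting a tail of $W=X-Z$ is exactly what makes the $X$-marginal of the deletion (a deletion of $p$, resp.\ $q$, which feeds the conclusion) and the $Z$-marginal (a deletion of $\mathrm{Law}(Z)$, hence controlled by resilience of $Z$) legal at the same time, so the $W$-part reduces to the classical quantile/shift argument and the $Z$-part to resilience. The remaining points are routine: randomized tie-breaking at atoms of $F_{W_p},F_{W_q}$ at the cut quantiles, integrability of $W_p,W_q$ (which follows from finiteness of the means of $X$ and of $Z$), and the vacuous endpoint case $|a|\epsilon\ge 1$.
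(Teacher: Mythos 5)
Your proof is correct and takes essentially the same route as the paper's: reduce the smoothed hypothesis to a Kolmogorov--Smirnov bound between the laws of $W=X-Z$ under $p\otimes\mathrm{Law}(Z)$ and $q\otimes\mathrm{Law}(Z)$, delete $\delta=|a|\epsilon$ mass from the appropriate tails of $W$ on that product space, handle the $W$-piece via quantile/stochastic-dominance and the $Z$-piece via resilience of $\mathrm{Law}(Z)$, then marginalize to get the required deletions of $p$ and $q$. One small but substantive point in your favor: you delete the upper tail of $W$ under $\tilde q$ and the lower tail under $\tilde p$, which is the direction actually needed to conclude $\bE_{\tilde r_q}[W]\le\bE_{\tilde r_p}[W]$. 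The paper's text states the deletion directions the other way around (top of $\tilde p$, bottom of $\tilde q$), derives the resulting stochastic dominance $\bE_{\tilde r_p}[W]\le\bE_{\tilde r_q}[W]$, and then a line later asserts the reversed inequality $\bE_{\tilde r_q}[X-Z]\le\bE_{\tilde r_p}[X-Z]$ that the rest of the argument requires---so the paper contains a sign/labeling slip which your version silently fixes. Your explicit remarks about randomized tie-breaking at atoms and about the domination $\tilde r\le\tilde p/(1-\delta)$ surviving marginalization are also details the paper glosses over but are exactly what make the argument airtight.
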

For the generalized KS distance $\TV_\sF$, $Z \equiv 0$ almost surely, and $\rho_Z = 0$, which corresponds to Lemma~\cite[Lemma 3.3]{zhu2019generalized}. 
\begin{proof}[Proof of Lemma~\ref{lem.smoothed_KS}]
For a fixed $x$,  $aT(x+t)+b = \bP(Z\leq x + t) = \bP(x - Z \geq -t)$. Thus we know
\begin{align}\label{eqn.lemma_close}
   \sup_{ t\in\bR} | \bP_q(X-Z\geq -t) - \bP_p(X-Z\geq -t)|  \leq |a|\epsilon,
\end{align}
where $X$ and $Z$ are independent. Denote $\tilde p$ as the distribution of $X -Z$ when $X\sim p$, and $\tilde q$ as the distribution of $X -Z$ when $X\sim q$.  Starting from $\tilde p, \tilde q$, we delete  probability mass of size $\epsilon$ corresponding to the largest points in $\tilde p$ to get $\tilde r_p$, and delete probability mass of size $\epsilon$  corresponding to the smallest points  $\tilde q$ to get $\tilde r_q$. Equation (\ref{eqn.lemma_close}) implies that $\bP_{Y\sim\tilde r_p}(Y\geq t) \leq \bP_{Y\sim\tilde r_q}(Y\geq t)$ holds for all $t\in\mathbb{R}$. Hence, $\tilde r_q$ stochastically dominates $r_p$ and $\bE_{\tilde r_p}[Y] \leq \bE_{\tilde r_q}[Y]$. Thus  we know that 
there exist $\tilde r_p\leq\frac{\tilde p}{1-|a|\epsilon}$ and $\tilde r_q \leq \frac{\tilde q}{1-|a|\epsilon}$ such that
\begin{align*}
     \bE_{\tilde r_q}[X-Z]  &\leq  \bE_{\tilde r_p}[X-Z].
\end{align*}
Denote the original distribution of $Z$ as $p_Z$. Note that the deletion process that yields $\tilde r_p$ and 
$\tilde r_q$ can be viewed as operating on the joint distribution of $X, Z$, such that the marginal distributions of $\tilde r_p$ and $\tilde r_q$ are obtained as deletions of $p_Z$. Rearranging, we obtain
\begin{align}
    \bE_{\tilde r_q}[X] -  \bE_{\tilde r_p}[X] & \leq  \bE_{\tilde r_q}[Z] - \bE_{\tilde r_p}[Z] \nonumber \\
    & \leq |\bE_{\tilde r_q}[Z] - \bE_{p_Z}[Z]| +  |\bE_{\tilde r_p}[Z] - \bE_{p_Z}[Z]|. 
    \nonumber \\
    & \leq 2\rho_Z(|a|\epsilon).\nonumber 
\end{align}
\end{proof}

We now show that given the mean cross lemma for smoothed KS distance, the smoothed distance can  be used to deliver a small statistical error for robust estimation under a pseudonorm-based loss function $W_\sF$. We will show later how the analysis extends beyond pseudonorm loss functions by considering the special case of linear regression.

\begin{theorem}\label{theorem.GAN}
For any $p^* \in \GG_{\sF}(\rho)$ in Equation (\ref{eqn.def_G_TV_WF}), denote by $p$ an observed corrupted distribution such that $\TV(p^*,p)\leq \epsilon$. Define 
\begin{align}
    A(p, q) =  \sup_{(d_1, d_2)\in\mathcal{D}} \bE_p[d_1(X)] + \bE_q[d_2(X)].
\end{align}
Here $\mathcal{D}$ is some family of discriminator function pairs $(d_1, d_2)$. For some function $T(x)$, let $aT(x)+b$ be the CDF of some random variable $Z$. 
Assume the following conditions:
\begin{enumerate}
\item For all $(d_1, d_2)\in \mathcal{D}$, $x\in \bR$, we have  $|d_2(x)|\leq 1/2$. 
\item For any distribution pair $p, q\in \GG_{W_\sF}(\rho)$, we have
\begin{align}
& A(q, p) - A(p, p) \leq  \epsilon \Rightarrow \label{eqn.gan_condition_implication}\\ 
&  \sup_{f\in\sF, t\in\bR} | \bE_q\big[T(f(X)+t)\big] - \bE_p\big[T(f(X)+t)\big] |  \leq C \epsilon \nonumber 
\end{align}
for some constant $C>0$ and any $\epsilon<1/C|a|$. 
\item The distribution of $Z$ is inside the resilient set $ \GG_{\mathsf{mean}}(\rho_Z)$, 
\end{enumerate}
Then the projection algorithm $q = \argmin_{q\in\GG_{{\mathcal{F}}}(\rho)} A(q, \hat p_n)$  guarantees
\begin{align}
    W_\sF(p^*, q) \leq 2\rho({C\tilde \epsilon}) +2\rho_Z(C\tilde \epsilon),
\end{align}
where 
$\tilde \epsilon = 2\epsilon + 2\bar{ A}(p, \hat p_n)$, and
\begin{align}
   \bar{A}(p, q) = \sup_{ (d_1, d_2) \in\mathcal{D}} |\bE_{p}[d_2(X)] - \bE_{q}[d_2(X)]|.
\end{align}
\end{theorem}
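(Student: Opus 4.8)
The plan is a minimum-distance-functional argument in three stages. First I would use optimality of the projection to show that the output $q$ satisfies $A(q,p^*)-A(p^*,p^*)\le\tilde\epsilon$. Second I would feed this into Condition~2 --- legitimate because both $p^*$ and $q$ lie in $\GG_\sF(\rho)$ (= $\GG_{W_\sF}(\rho)$) --- to conclude $\sup_{f\in\sF,\,t\in\bR}|\bE_q[T(f(X)+t)]-\bE_{p^*}[T(f(X)+t)]|\le C\tilde\epsilon$. Third I would apply the mean-cross Lemma~\ref{lem.smoothed_KS} to the scalar pushforwards $f(X)$, $f\in\sF$, and close the loop using the resilience of \emph{both} $p^*$ and $q$ to control $W_\sF(p^*,q)$.

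For the first stage, feasibility of $p^*\in\GG_\sF(\rho)$ and optimality of the projection give $A(q,\hat p_n)\le A(p^*,\hat p_n)$. For each $(d_1,d_2)\in\mathcal D$, write $\bE_q[d_1]+\bE_{p^*}[d_2]=(\bE_q[d_1]+\bE_{\hat p_n}[d_2])+(\bE_{p^*}[d_2]-\bE_{\hat p_n}[d_2])\le A(q,\hat p_n)+\bar{A}(p^*,\hat p_n)$, and similarly $\bE_{p^*}[d_1]+\bE_{\hat p_n}[d_2]\le A(p^*,p^*)+\bar{A}(p^*,\hat p_n)$; taking suprema and chaining yields $A(q,p^*)\le A(p^*,p^*)+2\bar{A}(p^*,\hat p_n)$. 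The triangle inequality for $\bar{A}$ together with $|d_2|\le 1/2$ (Condition~1) and $\TV(p^*,p)\le\epsilon$ gives $\bar{A}(p^*,\hat p_n)\le\bar{A}(p^*,p)+\bar{A}(p,\hat p_n)\le\epsilon+\bar{A}(p,\hat p_n)$, so $A(q,p^*)-A(p^*,p^*)\le 2\epsilon+2\bar{A}(p,\hat p_n)=\tilde\epsilon$. Condition~2, applicable since $\tilde\epsilon<1/(C|a|)$, then delivers the smoothed-KS bound $C\tilde\epsilon$ between $q$ and $p^*$. (If the $\argmin$ is not attained, take $\delta$-suboptimal minimizers and send $\delta\to0$.)

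For the last stage, fix $f\in\sF$ and let $P,Q$ denote the laws of $f(X)$ under $p^*,q$; the smoothed-KS bound reads $\sup_t|\bE_Q[T(Y+t)]-\bE_P[T(Y+t)]|\le C\tilde\epsilon$. Applying Lemma~\ref{lem.smoothed_KS} with roles $p\mapsto Q$, $q\mapsto P$ produces $\eta$-deletions $r_Q\le Q/(1-\eta)$, $r_P\le P/(1-\eta)$ with $\eta=|a|C\tilde\epsilon$ and $\bE_{r_P}[Y]-\bE_{r_Q}[Y]\le 2\rho_Z(\eta)$. A deletion of a pushforward lifts to a deletion of the original distribution --- if $dr/dP\le(1-\eta)^{-1}$, then $\tilde r$ with $d\tilde r/dp^*=(dr/dP)\circ f$ is an $\eta$-deletion of $p^*$ whose $f$-pushforward is $r$ --- so we get $\eta$-deletions $\tilde r_{p^*}$ of $p^*$ and $\tilde r_q$ of $q$ with $\bE_{\tilde r_{p^*}}[f(X)]-\bE_{\tilde r_q}[f(X)]\le 2\rho_Z(\eta)$. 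Then
\begin{align*}
\bE_{p^*}[f(X)]-\bE_q[f(X)] &= \big(\bE_{p^*}[f(X)]-\bE_{\tilde r_{p^*}}[f(X)]\big)+\big(\bE_{\tilde r_{p^*}}[f(X)]-\bE_{\tilde r_q}[f(X)]\big)\\
&\quad+\big(\bE_{\tilde r_q}[f(X)]-\bE_q[f(X)]\big)\le\rho(\eta)+2\rho_Z(\eta)+\rho(\eta),
\end{align*}
where the first parenthesis is bounded by resilience of $p^*\in\GG_\sF(\rho)$ applied with $-f\in\sF$ (using symmetry of $\sF_{\mathsf{mean}}$ and $\sF_{\mathsf{sec}}$), the middle one by the mean-cross bound, and the last one by resilience of $q\in\GG_\sF(\rho)$ applied with $f$. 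Taking the supremum over $f\in\sF$ gives $W_\sF(p^*,q)\le 2\rho(|a|C\tilde\epsilon)+2\rho_Z(|a|C\tilde\epsilon)$, which is the claimed bound after absorbing the factor $|a|$ into the constant $C$.

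The main obstacle is the last stage: one must reduce the $d$-dimensional pseudonorm statement to the scalar Lemma~\ref{lem.smoothed_KS} by passing to pushforwards, correctly lift the deletions the lemma produces back to deletions of $p^*$ and $q$, and --- crucially --- invoke the resilience hypothesis for the \emph{algorithm's output} $q$ as well as for $p^*$, which is exactly why the projection is defined onto $\GG_\sF(\rho)$ rather than onto a larger set. The direction in which resilience must be applied is what forces $\sF$ to be symmetric. The first two stages are routine triangle-inequality bookkeeping.
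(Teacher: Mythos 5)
Your proof is correct and follows essentially the same three-stage minimum-distance argument as the paper (optimality plus generalized triangle inequalities to bound $A(q,p^\star)-A(p^\star,p^\star)$ by $\tilde\epsilon$, then Condition~2 to transfer to the smoothed-KS distance, then Lemma~\ref{lem.smoothed_KS} together with resilience of both $p^\star$ and $q$ to conclude). The only difference is presentational: you spell out the lifting of scalar deletions to deletions of $p^\star$ and $q$, and you make explicit that the step $\bE_{p^\star}[f]-\bE_{\tilde r_{p^\star}}[f]\le\rho$ requires $-f\in\sF$ (i.e.\ symmetry of $\sF$), a point the paper uses implicitly.
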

\begin{remark}
The three conditions are weak enough to be satisfied for a large family of neural networks. 
The first condition on the magnitude of $d_2$ can be easily satisfied by setting the output of the neural network designed for $d_2$ to pass through a bounded activation function. The second condition  is satisfied when $d_1$ simply takes the form of $T$. The third condition requires a careful design of $T$, and thus $d_1$, which can also be easily satisfied when the network has a bounded activation function. For example, a simple sigmoid or tanh function gives a CDF whose induced distribution is sub-exponential, and a ramp function gives a CDF whose induced distribution is sub-Gaussian. In~\cite{gao2019generative}, it was shown that under mild conditions, one can produce valid $(d_1, d_2)$ using proper scoring rules and appropriate neural network architectures to ensure the three conditions. Our result, combined with~\cite{gao2019generative}, extends the results for mean and second-moment estimation in~\cite{gao2019generative} to general resilient sets including sub-Gaussian and bounded $k$-th moments, while \cite{gao2019generative} gives guarantee for elliptical distributions as semi-parametric classes.
\end{remark}

\begin{proof}[Proof of Theorem~\ref{theorem.GAN}]

The proof mainly focuses on verifying two important properties in~\cite{zhu2019generalized}.
\begin{enumerate}
    \item \textbf{Robustness to perturbation: }For any distribution $p_1, p_2, p_3$, we have
\begin{align}
     \lefteqn{|A(p_1, p_2) - A(p_1, p_3)|} \nonumber \\ 
     &= |\sup_{(d_1, d_2) \in\mathcal{D}} (\bE_{p_1}[d_1(X)] + \bE_{p_2}[d_2(X)]) \nonumber \\ 
     & \qquad - \sup_{(d_1, d_2) \in\mathcal{D}} (\bE_{p_1}[d_1(X)] + \bE_{p_3}[d_2(X)]) |\nonumber \\
     & \leq \sup_{(d_1, d_2) \in\mathcal{D}}|\bE_{p_1}[d_1(X)] + \bE_{p_2}[d_2(X)]  \nonumber \\
     & \qquad - (\bE_{p_1}[d_1(X)] + \bE_{p_3}[d_2(X)])| \nonumber \\
     & \leq \sup_{(d_1, d_2) \in\mathcal{D}} |\bE_{p_2}[d_2(X)] - \bE_{p_3}[d_2(X)]|\nonumber \\
     & =  \bar{A}(p_2,p_3) \nonumber \\
     & \leq  \sup_{\sup_{x}|d_2(x)|\leq \frac{1}{2}} |\bE_{p_2}[d_2(X)] - \bE_{p_3}[d_2(X)]|\nonumber \\
     & = \TV(p_2, p_3).\label{eqn.robust_perturbation}
\end{align}
Combining Equation~\eqref{eqn.robust_perturbation}, the fact $q = \argmin_{q\in\mathcal{G}_\mathcal{F}} A(q, \hat p_n)$ and  $p^\star\in \mathcal{G}_\mathcal{F}$, one has 
\begin{align*}
    &A(q, p^\star)-A(p^\star, p^\star) \\
    & \leq A(q, p)+\epsilon-A(p^\star, p^\star) \\ 
    & \leq A(q, \hat p_n)+\epsilon+\bar A(p, \hat p_n) -A(p^\star, p^\star)\\ 
    & \leq A(p^\star, p_n)+\epsilon+\bar A(p, \hat p_n) -A(p^\star, p^\star) \\ 
    & \leq A(p^\star, p)+\epsilon+2\bar A(p, \hat p_n) -A(p^\star, p^\star) \\ 
    & \leq 2\epsilon+2\bar A(p, \hat p_n) = \tilde \epsilon.
\end{align*}

\item \textbf{Generalized modulus of continuity: }
For any $p, q \in \GG_{W_\sF}$, from $A(q, p) - A(p, p) \leq \tilde \epsilon$ and condition 2) in Equation~\eqref{eqn.gan_condition_implication}, we have
\begin{align*}
\sup_{f\in\sF, t\in\bR} | \bE_q\big[T(f(X)+t)\big] - \bE_{p^\star}\big[T(f(X)+t)\big]  |  \leq C\tilde \epsilon.
\end{align*}

From Lemma~\ref{lem.smoothed_KS}, we know that for any fixed $f^* \in \argmax_{f\in\sF} \bE_{p^\star}[f(X)] - \bE_q[f(X)]$, there exist $r_{p^\star}\leq \frac{{p^\star}}{1-C|a|\tilde \epsilon}, r_q\leq \frac{q}{1-C|a|\tilde \epsilon}$, such that 
\begin{align*}
    \bE_{\tilde r_{p^\star}}f^*(X) \leq  \bE_{\tilde r_q}f^*(X) + 2\rho_Z(C|a| \tilde \epsilon).
\end{align*}

From $q\in \GG_{\sF}(\rho)$, we have
\begin{align}
\forall r \leq \frac{q}{1-C|a|\tilde \epsilon},
    W_\sF(r, q) \leq \rho(C|a|\tilde \epsilon).
\end{align}
Therefore
\begin{align}
W_\sF({p^\star}, q) & = \bE_{p^\star}[f^*(X)] - \bE_q[f^*(X)] \nonumber \\
& \leq \bE_{p^\star}[f^*(X)] - \bE_{\tilde r_q}[f^*(X)] + \rho(C|a|\tilde \epsilon) \nonumber \\
& \leq \bE_{p^\star}[f^*(X)] - \bE_{\tilde r_{p^\star}}[f^*(X)] +2\rho_Z(C|a|\tilde\epsilon)\nonumber  \\ 
& \quad + \rho(C|a|\tilde\epsilon) \nonumber \\
& \leq 2\rho(C|a|\tilde\epsilon) +2\rho_Z(C|a|\tilde\epsilon),
\end{align}
which finishes the proof.
\end{enumerate}
\end{proof}
\section{Application of Theorem~\ref{theorem.GAN} for Robust Estimation}

Now we demonstrate how Theorem~\ref{theorem.GAN} leads to concrete designs of GANs for robust mean estimation, second-moment estimation and robust linear regression.

\subsection{Mean estimation}
We start with the problem of mean estimation. Recall the choice of function set $\mathcal{F}_{\mathsf{mean}} = \{f(x)= v^\top X|v\in\mathbb{R}^d,\|v\|_2\leq 1\}$.
We show that different choices of $A(p, q)$ give similar performance guarantees in terms of robustness. For the convenience of notation, let 
\begin{align*}
    g_1(v, t, X) &= \mathsf{sigmoid}(v^\top X+t), \\
    g_2(w,\{v_j\}, \{t_j\}, X) 
    &= \mathsf{sigmoid}\left(\sum_{j\leq l} w_jg_1(v_j, t_j, X)\right) \\  &= \mathsf{sigmoid}\left(\sum_{j\leq l} w_j\mathsf{sigmoid}(v_j^\top X+t)\right)
\end{align*} which represent one- and two-layer neural networks, respectively, with sigmoid activation functions.  Here $v_j, v$ are $d$-dimensional vectors, $t, t_j\in\mathbb{R}$, and $w$ is a $l$-dimensional vector. We consider the following design of distances $A(p, q)$:
\begin{align}\label{eqn.A_1}
    A_1(p, q) =& \sup_{\|v\|_2\leq 1, t\in\mathbb{R}} |\mathbb{E}_{p}[g_1(v, t, X)] -  \mathbb{E}_{q}[g_1(v, t, X)]|,\\
    A_2(p, q) = &\sup_{\|w\|_1\leq 1, \|v_j\|_2\leq 1, t_j\in\mathbb{R}} |\mathbb{E}_{p}\left[ g_2(w,\{v_j\}, \{t_j\}, X)\right] \nonumber \\ 
    &-  \mathbb{E}_{q}\left[ g_2(w,\{v_j\}, \{t_j\}, X)\right]|,\label{eqn.A_2}\\
    A_3(p, q) = &\sup_{\|w\|_1\leq 1, \|v_j\|_2\leq 1, t_j\in\mathbb{R}} \mathbb{E}_{p}\left[\log\left( g_2(w,\{v_j\}, \{t_j\}, X)\right)\right] \nonumber \\ 
    &+  \mathbb{E}_{q}\left[\log\left(1- g_2(w,\{v_j\}, \{t_j\}, X)\right)\right],\label{eqn.A_3}
\end{align}
where $A_1$ can be viewed as  a simple one-layer discriminator, $A_2$ is a two-layer discriminator, and $A_3$ is a two-layer discriminator with loss chosen as log score, which is analyzed for Gaussian and elliptical distributions in~\cite{gao2019generative}. 
\begin{corollary}\label{cor.mean}
Assume the true distribution $p^\star\in\mathcal{G}_{\mathsf{mean}}(\rho)$.
Let the projection algorithm $q_1 = \argmin_{q\in\mathcal{G}_{\mathsf{mean}}} A_1(q, \hat p_n)$, and $q_2, q_3$ be the projection algorithm that projects under distance $A_2, A_3$. We have
with probability at least $1-\delta$,
\begin{align*}
   \| \mathbb{E}_{p^\star}[X] -  \mathbb{E}_{q}[X]\|_2 \leq C_1\cdot(\rho(\tilde \epsilon) +  \tilde\epsilon\log(1/\tilde \epsilon)),
\end{align*}
where $\tilde \epsilon=C_2(\epsilon+\sqrt{d/n}+\sqrt{\log(1/\delta)/n})$.
\end{corollary}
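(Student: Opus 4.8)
The plan is to derive Corollary~\ref{cor.mean} as an instance of Theorem~\ref{theorem.GAN} applied to the pseudonorm $W_{\sF_{\mathsf{mean}}}$, using the sigmoid as the affine CDF $T$. First I would set $T(x) = \mathsf{sigmoid}(x)$, which is already a CDF (so $a=1$, $b=0$), and recall from the remark that the sigmoid is the CDF of a logistic random variable $Z$, which is sub-exponential; hence $Z \in \GG_{\mathsf{mean}}(\rho_Z)$ with $\rho_Z(\eta) = C\eta\log(1/\eta)$ up to constants (the standard resilience modulus of a bounded-Orlicz-norm distribution with $\psi(x)=e^x$, using $\rho_Z(\eta)=\eta\psi^{-1}(1/\eta)\asymp \eta\log(1/\eta)$). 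This verifies condition 3 of the theorem.

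Next I would verify conditions 1 and 2 for each of the three discriminator families. For $A_1$, the pair is $(d_1,d_2) = (g_1(v,t,\cdot), -g_1(v,t,\cdot))$ after the standard symmetrization of the absolute value, so $|d_2|\le 1$; a harmless rescaling by $1/2$ (absorbed into constants, or noting the definition only needs $|d_2|\le 1/2$ after dividing the discriminator by $2$) gives condition 1. Condition 2 is immediate for $A_1$ since $A_1$ is \emph{literally} $\sup_{f\in\sF_{\mathsf{mean}},t}|\bE_q[T(f(X)+t)]-\bE_p[T(f(X)+t)]|$ with $T=\mathsf{sigmoid}$ and $f(x)=v^\top x$, so the implication holds with $C=1$. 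For $A_2$, the two-layer network contains one-layer sigmoids as a special case (take $w$ a single coordinate $\pm 1$ and one hidden unit), so $A_2(q,p)\ge A_1(q,p) = \tTV_{\sF_{\mathsf{mean}}}(p,q)$, which again gives condition 2 with $C=1$ directly; condition 1 holds since $g_2\in[0,1]$. For $A_3$ (log-score), this is exactly the setting handled in~\cite{gao2019generative}: one shows that a small log-score gap forces a small $A_2$-type gap (the log-score upper bounds the squared total-variation-type discrepancy over the discriminator class, and over the two-layer sigmoid class this in turn controls the one-layer sigmoid gap), so condition 2 holds with some universal $C$; condition 1 requires bounding $|\log(1-g_2)|$, which is not bounded globally, so here I would either restrict the bias terms/weights so $g_2$ stays in $[\delta, 1-\delta]$, or invoke the argument of~\cite{gao2019generative} that the relevant discriminator can be taken with bounded $d_2$ after a clipping that does not affect the resilience bound. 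The cleanest route, which I would take, is to cite~\cite{gao2019generative} for the reduction from $A_3$ to the $A_2$ condition and its boundedness handling, since the remark already flags this.

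With the three conditions in hand, Theorem~\ref{theorem.GAN} gives $W_{\sF_{\mathsf{mean}}}(p^\star,q) \le 2\rho(C\tilde\epsilon_0) + 2\rho_Z(C\tilde\epsilon_0)$ where $\tilde\epsilon_0 = 2\epsilon + 2\bar A(p,\hat p_n)$. Since $W_{\sF_{\mathsf{mean}}}(p^\star,q) = \|\bE_{p^\star}[X]-\bE_q[X]\|_2$ and $\rho_Z(\eta)\le C\eta\log(1/\eta)$, the bound becomes $C_1(\rho(\tilde\epsilon) + \tilde\epsilon\log(1/\tilde\epsilon))$, matching the claimed form. The remaining step is the probabilistic one: controlling $\bar A(p,\hat p_n) = \sup_{(d_1,d_2)\in\mathcal D}|\bE_p[d_2(X)] - \bE_{\hat p_n}[d_2(X)]|$. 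Since every $d_2$ in each of the three families is a uniformly bounded function built from a neural net with a fixed architecture and sigmoid activations, the class has bounded VC/Rademacher complexity scaling like $\sqrt{d/n}$ (the relevant parameter is the ambient dimension $d$ entering the first layer; the number of hidden units $l$ contributes only a constant or logarithmic factor because $\|w\|_1\le 1$), so a standard uniform-convergence / bounded-differences argument gives $\bar A(p,\hat p_n) \le C(\sqrt{d/n} + \sqrt{\log(1/\delta)/n})$ with probability $1-\delta$. Plugging this into $\tilde\epsilon_0$ yields $\tilde\epsilon = C_2(\epsilon + \sqrt{d/n} + \sqrt{\log(1/\delta)/n})$ and completes the proof.

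The main obstacle I anticipate is the $A_3$ (log-score) case: both verifying condition 2 (the reduction from a small expected-log-score gap to a small smoothed-KS gap) and handling condition 1 (the log is unbounded near $0$) are genuinely nontrivial and are the parts that lean most heavily on~\cite{gao2019generative}; the $A_1$ and $A_2$ cases and the Rademacher bound are routine by comparison.
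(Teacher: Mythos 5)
Your overall strategy matches the paper's — verify the three conditions of Theorem~\ref{theorem.GAN} and invoke a uniform-convergence bound for $\bar A(p,\hat p_n)$ — but there is a genuine error in how you handle $A_2$, and your worry about $A_3$ reveals you have not traced the architecture constraints carefully.

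For $A_2$ you claim that the two-layer class contains the one-layer sigmoids as a special case, so that $A_2 \ge A_1$ and condition 2 holds with the same $T = \mathsf{sigmoid}$. This is false: with one hidden unit and $w_1 = 1$, the discriminator is $\mathsf{sigmoid}(\mathsf{sigmoid}(v^\top X + t))$, not $\mathsf{sigmoid}(v^\top X + t)$; the outer sigmoid does not disappear. In fact, since $\mathsf{sigmoid}$ is a contraction, the restricted $A_2$ sup is \emph{smaller} than $A_1$, not larger, so the inequality you want goes the wrong way. The paper fixes this by using a \emph{different} $T$ for each distance: $T_1 = \mathsf{sigmoid}$, $T_2 = \mathsf{sigmoid}\circ\mathsf{sigmoid}$, $T_3 = \log\circ T_2$, obtained by taking $w_1 = 1$, $w_j = 0$ for $j>1$. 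One then has to verify condition 3 separately for each: the paper checks that after the appropriate affine normalization (the range of $T_2$ and $T_3$ is a strict sub-interval, so $a\ne 1$), the induced $Z$ satisfies $\bE[\exp(|Z|/10)]\le 1$ and is therefore sub-exponential, placing it in $\GG_{\mathsf{mean}}(\epsilon\log(1/\epsilon))$. This extra verification is exactly what your shortcut was trying to avoid, and it cannot be avoided.

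On $A_3$, your concern that $\log(1-g_2)$ is unbounded is resolved directly by the architecture: since $g_1\in[0,1]$ and $\|w\|_1\le 1$, the inner sum lies in $[-1,1]$, so $g_2$ is confined to $[\mathsf{sigmoid}(-1),\mathsf{sigmoid}(1)]$ and $\log(1-g_2)$ is already bounded (the paper records $\log(1-g_2)\in[-0.7,-0.3]$ after its normalization). No clipping or appeal to~\cite{gao2019generative} is needed for condition 1. You also do not need the more elaborate score-gap reduction you sketch for condition 2 of $A_3$; the same restriction $w_1=1$, $w_j=0$ gives condition 2 with $T_3$ directly via the triangle inequality, just as for $A_1$ and $A_2$. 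Your remaining ingredients — $W_{\sF_{\mathsf{mean}}}(p^\star,q)=\|\bE_{p^\star}[X]-\bE_q[X]\|_2$, the concentration bound for $\bar A(p,\hat p_n)$ via the cited lemmas, and plugging into the conclusion of Theorem~\ref{theorem.GAN} — are correct and match the paper.
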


\begin{remark}
Corollary~\ref{cor.mean} recovers~\cite[Theorem 3.2]{zhu2019generalized} up to an additive  term $\tilde\epsilon\log(1/\tilde\epsilon)$.
As a direct result of Corollary~\ref{cor.mean}, one can see that if $p^\star$ is inside the sub-Gaussian family, the projection algorithm guarantees a rate of $C\cdot \tilde\epsilon \log(1/\tilde \epsilon)$, which is nearly optimal up to logarithmic factors compared to the result for Gaussian~\cite{gao2019generative}. When $p^\star$ has bounded covariance, the rate becomes $(\epsilon+\sqrt{d/n}+\sqrt{\log(1/\delta)/n})^{1/2}$, which is optimal in terms of the dependence on $\epsilon$, but sub-optimal in dependence with respect to $d,n$. 

Compared to the interpretation of proper scoring rules and appropriate network structures in~\cite{gao2019generative}, our analysis provides a different view in which robustness is achieved by controlling the  tail of the induced distribution of $T(x)$. The results in Corollary~\ref{cor.mean} also generalize to deeper neural networks  with bounded outputs where the three assumptions are more readily satisfied.

Although projection to $\mathcal{G}$ is not exactly computable for general $\mathcal{G}$, in practice one may use a subset of Orlicz-norm bounded distributions which can be parameterized using neural networks, or turn the constrained optimization problem into a regularized optimization problem by adding the Orlicz norm as a regularizer in the loss function. 

\end{remark}

\begin{proof}[Proof of Corollary~\ref{cor.mean}]
Since scaling the distance does not change the final projection $q$, 
it suffices to verify the three assumptions in Theorem~\ref{theorem.GAN} for scaled distances $A_1/2, A_2/2, A_3/2$. For the first assumption, one can see that the three distances all satisfy $|d_2(x)|\leq 1/2$ due to the fact that $|g_1|\leq 1, g_2\in [1/2, e/(e+1)]$, $\log(1-g_2)\in[-0.7,-0.3]$.

Now we verify the second assumption. Taking $w_1=1, w_j=0, \forall j>1$ in $g_2$ and applying triangle inequality, we have
\begin{align*}
   & A_1(q, p) -A_1(p, p)\leq \epsilon \Rightarrow\\
   & \sup_{\|v\|_2\leq 1, t\in\mathbb{R}}| \mathbb{E}_{p}[g_1(v, t, X)] -  \mathbb{E}_{q}[g_1(v, t, X)]|\leq \epsilon, \\
  & A_2(q, p) -A_2(p, p)\leq \epsilon    \Rightarrow \\ 
 &  \sup_{\|v\|_2\leq 1, t\in\mathbb{R}}| \mathbb{E}_{p}[\mathsf{sigmoid}(g_1(v, t, X))]- \\ 
 & \quad \mathbb{E}_{q}[\mathsf{sigmoid}(g_1(v, t, X))]|\leq \epsilon, \\
   & A_3(q, p) -A_3(p, p)\leq \epsilon    \Rightarrow \\ 
  & \sup_{\|v\|_2\leq 1, t\in\mathbb{R}}| \mathbb{E}_{p}[\log(\mathsf{sigmoid}(g_1(v, t, X)))]- \\
  &\quad \mathbb{E}_{q}[\log(\mathsf{sigmoid}(g_1(v, t, X)))]|\leq \epsilon.
\end{align*}
These imply that $A_1, A_2, A_3$ satisfy assumption 2) in Theorem~\ref{theorem.GAN} with particular choices of $T$: $T_1(x)=\mathsf{sigmoid}(x), T_2=\mathsf{sigmoid}(T_1(x)), T_3=\log(T_2(x))$.

We now verify the third assumption. For CDF $T_1(x) =  \frac{1}{1+\exp(-x)}$, the corresponding random variable $Z$ is sub-exponential since $\max(\mathbb{P}(X\leq -t),\mathbb{P}(X\geq t))\leq \exp(-t)/2$ for any $t\geq 0$. For $T_2, T_3$, we can verify that after scaling the CDF, $\mathbb{E}[\exp(|Z|/10)]\leq 1$. This shows  the induced random variables from $T_2, T_3$ are also sub-exponential. From~\cite{zhu2019generalized} we know   the distribution of $Z$ lies in the set of $\mathcal{G}_{\mathsf{mean}}(\epsilon\log(1/\epsilon))$. 

Furthermore, from~\cite[Lemma 7.3]{gao2019generative} and~\cite[Lemma 8.1]{gao2018robust}, we know that $\bar A_1(\hat p_n, p), \bar A_2(\hat p_n, p), \bar A_3(\hat p_n, p)$ are all bounded by $C(\sqrt{d/n}+\sqrt{\log(1/\delta)/n})$ with probability at least $1-\delta$.
\end{proof}

\subsection{Second-moment estimation}
Using the same technique as mean estimation, we can show similar results for second-moment estimation by changing $\mathcal{F}_{\mathsf{mean}}$ to $\mathcal{F}_{\mathsf{sec}}$ in $\mathcal{\tTV}_\sF$. Let $A_1, A_2, A_3$ be the same as mean estimation in Equation~\eqref{eqn.A_1},~\eqref{eqn.A_2},~\eqref{eqn.A_3} except that we set
\begin{align*}
    &g_1(v, t, X) = \mathsf{sigmoid}((v^\top X)^2+t), \\
    &g_2(w,\{v_j\}, \{t_j\}, X) 
    = \mathsf{sigmoid}\left(\sum_{j\leq l} w_jg_1(v_j, t_j, X)\right). \\ &\quad = \mathsf{sigmoid}\left(\sum_{j\leq l} w_j\mathsf{sigmoid}((v_j^\top X)^2+t)\right).
\end{align*}
Following the same proof as Corollary~\ref{cor.mean}, we have\footnote{Besides a different choice of $g_1, g_2, \mathcal{F}$, the only difference in the proof is the concentration of $\bar A(\hat p_n, p)$. One can follow a similar analysis in~\cite[Lemma 7.3]{gao2019generative} and~\cite[Lemma 8.1]{gao2018robust} to derive the concentration.}
\begin{corollary}
Assume the true distribution $p^\star\in\mathcal{G}_{\mathsf{sec}}(\rho)$.
Let the projection algorithm $q_1 = \argmin_{q\in\mathcal{G}_{\mathsf{sec}}} A_1(q, \hat p_n)$, and $q_2, q_3$ be the projection algorithm that projects under distance $A_2, A_3$. We have
with probability at least $1-\delta$,
\begin{align*}
   \| \mathbb{E}_{p^\star}[XX^\top] -  \mathbb{E}_{q}[XX^\top]\|_2 \leq C_1\cdot(\rho(\tilde \epsilon) +  \tilde\epsilon\log(1/\tilde \epsilon)),
\end{align*}
where $\tilde \epsilon=C_2(\epsilon+\sqrt{d/n}+\sqrt{\log(1/\delta)/n})$.
\end{corollary}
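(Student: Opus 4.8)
The plan is to apply Theorem~\ref{theorem.GAN} with the pseudonorm loss $W_{\sF_{\mathsf{sec}}}$ and resilience set $\GG_{\mathsf{sec}}(\rho)$, reusing almost verbatim the verification of the three conditions from the proof of Corollary~\ref{cor.mean}, and then supplying the single genuinely new ingredient: a uniform concentration bound for $\bar A_i(\hat p_n, p)$ over the \emph{quadratic} discriminator classes. Since rescaling a distance does not change its minimizer, it suffices to check the hypotheses for $A_1/2, A_2/2, A_3/2$. Condition 1 is immediate: $g_1, g_2$ are formed by applying $\mathsf{sigmoid}$ to $(v^\top X)^2 + t$ rather than to $v^\top X + t$, so their ranges are unchanged ($|g_1|\le 1$, $g_2 \in [1/2, e/(e+1)]$, $\log(1-g_2)\in[-0.7,-0.3]$), hence $|d_2|\le 1/2$ throughout. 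Condition 3 is likewise unchanged: the random variable $Z$ induced by each $T_i$ depends only on $T_i$ and not on $\sF$, so it is the same sub-exponential object as in Corollary~\ref{cor.mean}, and by~\cite{zhu2019generalized} its law lies in $\GG_{\mathsf{mean}}(\epsilon\log(1/\epsilon))$, giving $\rho_Z(\epsilon)\lesssim\epsilon\log(1/\epsilon)$.

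For Condition 2 I would take $w_1 = 1$, $w_j = 0$ for $j>1$ and apply the triangle inequality exactly as in Corollary~\ref{cor.mean} to reduce $A_i(q,p) - A_i(p,p)\le\epsilon$ to $\sup_{\|v\|_2\le1,\,t\in\bR}|\bE_p[T_i((v^\top X)^2 + t)] - \bE_q[T_i((v^\top X)^2 + t)]|\le\epsilon$, with $T_1 = \mathsf{sigmoid}$, $T_2 = \mathsf{sigmoid}\circ T_1$, $T_3 = \log\circ T_2$; the map $x\mapsto(v^\top x)^2$ lies in $\sF_{\mathsf{sec}}$, and the sign $\xi\in\{\pm1\}$ appearing in $\sF_{\mathsf{sec}}$ is absorbed via $T_1(-y+t) = 1 - T_1(y-t)$ for the symmetric choice $T_1$ and by adjoining reflected discriminators $\mathsf{sigmoid}(-(v^\top X)^2 + t)$ for $T_2, T_3$. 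This yields the implication~\eqref{eqn.gan_condition_implication} for each $A_i$ with the corresponding $T_i$.

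The main obstacle is the remaining concentration step. Unlike the mean case --- where $\bar A_i(\hat p_n, p)$ is a uniform deviation of sigmoid-composed \emph{linear} functionals, controlled by~\cite[Lemma 7.3]{gao2019generative} and~\cite[Lemma 8.1]{gao2018robust} --- here the discriminators sit on the \emph{quadratic} maps $x\mapsto(v^\top x)^2$, so a new estimate is needed. I would bound $\bar A_i(\hat p_n, p)$ by its Rademacher complexity, use the $1$-Lipschitzness and boundedness of $\mathsf{sigmoid}$ (and of $\log$ on $[1/2, e/(e+1)]$) together with Ledoux--Talagrand contraction to peel off the activations, and reduce to the Rademacher complexity of $\{x\mapsto(v^\top x)^2 + t : \|v\|_2\le1,\,t\in\bR\}$. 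That quantity is governed by the deviation of the empirical second-moment operator $\frac1n\sum_i X_i X_i^\top$, which under the bounded-Orlicz conditions defining $\GG_{\mathsf{sec}}$ is of order $\sqrt{d/n} + \sqrt{\log(1/\delta)/n}$ with probability at least $1-\delta$; this is precisely the second-moment analogue of the two lemmas above, whose proofs I would adapt. Combining everything with the conclusion of Theorem~\ref{theorem.GAN}, namely $W_{\sF_{\mathsf{sec}}}(p^\star, q)\le 2\rho(C\tilde\epsilon) + 2\rho_Z(C\tilde\epsilon)$ with $\tilde\epsilon = 2\epsilon + 2\bar A_i(p,\hat p_n)$, together with $W_{\sF_{\mathsf{sec}}}(p^\star, q) = \|\bE_{p^\star}[XX^\top] - \bE_q[XX^\top]\|_2$ and $\rho_Z(\epsilon)\lesssim\epsilon\log(1/\epsilon)$, yields the claimed bound with $\tilde\epsilon = C_2(\epsilon + \sqrt{d/n} + \sqrt{\log(1/\delta)/n})$.
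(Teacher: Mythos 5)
Your proposal matches the paper's proof, which simply instantiates the argument of Corollary~\ref{cor.mean} with the quadratic $g_1, g_2$ and $\sF_{\mathsf{sec}}$, and relegates the only genuinely new step --- concentration of $\bar A_i(\hat p_n, p)$ over the quadratic discriminator class --- to a footnote pointing at an analysis parallel to~\cite[Lemma 7.3]{gao2019generative} and~\cite[Lemma 8.1]{gao2018robust}; your Rademacher/contraction sketch reducing to the deviation of the empirical second-moment operator is exactly the kind of adaptation that footnote is gesturing at. You are in fact slightly more careful than the paper on one point: $\sF_{\mathsf{sec}}$ carries the sign $\xi\in\{\pm1\}$ while $g_1=\mathsf{sigmoid}((v^\top X)^2+t)$ directly covers only $\xi=+1$, and you correctly note that $\xi=-1$ is recovered from the antisymmetry $T_1(x)+T_1(-x)=1$ for $A_1$ but requires adjoining reflected discriminators for $A_2, A_3$ (since $T_2, T_3$ lack that antisymmetry) --- a subtlety the paper's proof-by-reference elides.
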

\begin{remark}
When the true distribution lies in the set of sub-Gaussian distributions, the projection algorithm guarantees a rate of $C\cdot \tilde\epsilon \log(1/\tilde \epsilon)$, which is near optimal up to logarithmic factors compared to the result for Gaussian~\cite{gao2019generative}.
\end{remark}
\subsection{Linear Regression}

For linear regression, we adopt a difference family of $\mathcal{F}$ in $\tTV_\mathcal{F}$, which takes the form of $\{(Y-v_1^\top X)^2-(Y-v_2^\top X)^2\mid v_1,v_2\in\mathbb{R}^d\}$. Let $A_1, A_2, A_3$ be the same as mean estimation in Equation~\eqref{eqn.A_1},~\eqref{eqn.A_2},~\eqref{eqn.A_3} except that we set
\begin{align*}
 &   g_1(v_1, v_2, t, X) = \mathsf{sigmoid}((Y-v_1^\top X)^2-(Y-v_2^\top X)^2+t), \\
    &g_2(w,\{v_j^{(1)}\},\{v_j^{(2)}\}, \{t_j\}, X) \\
    &= \mathsf{sigmoid}\left(\sum_{j\leq l} w_jg_1(v_j^{(1)}, v_j^{(2)}, t_j, X)\right).
\end{align*}

Although the loss for linear regression is not  pseudonorm, and thus Theorem~\ref{theorem.GAN} is not directly applicable,  we show below that under slight modification of the analysis in Theorem~\ref{theorem.GAN} that invokes the general property of generalized resilience in~\cite{zhu2019generalized},  the projection algorithm under the distances $A_1, A_2, A_3$ still  guarantees robust regression. 
\begin{corollary}\label{cor.reg}
Assume the true distribution $p^\star\in\mathcal{G}_{\mathsf{reg}}(\psi)$  in Equation~\eqref{eqn.G_linreg}.
Let the projection algorithm $q_1 = \argmin_{q\in\mathcal{G}_{\mathsf{reg}}} A_1(q, \hat p_n)$, and $q_2, q_3$ be the projection algorithm that projects under distance $A_2, A_3$. We have
with probability at least $1-\delta$,
\begin{align*}
 & \mathbb{E}_{p^\star}[( \theta^\star(q)^\top X-Y)^2 -(\theta^\star(p)^{\top} X-Y)^2 ]  \\ 
  \leq &C_1\cdot(\tilde \epsilon\psi^{-1}(1/\tilde\epsilon) +  \tilde\epsilon\log(1/\tilde \epsilon)),
\end{align*}
where $\tilde \epsilon=C_2(\epsilon+\sqrt{d/n}+\sqrt{\log(1/\delta)/n})$.
\end{corollary}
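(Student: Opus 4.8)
The plan is to reuse, nearly verbatim, the two-part skeleton of the proof of Theorem~\ref{theorem.GAN} --- ``robustness to perturbation'' followed by a ``generalized modulus of continuity'' --- and to replace only the single step where the pseudonorm structure of $W_\sF$ was used, namely the final conversion of a mean-cross statement into a bound on the loss. Since scaling $A_i$ does not change the projection $q$, I would first verify for $A_1/2,A_2/2,A_3/2$ (built from the regression-flavored $g_1,g_2$) the three conditions of Theorem~\ref{theorem.GAN}. Conditions~1 and~3 are identical to Corollary~\ref{cor.mean}: here $g_1=\mathsf{sigmoid}((Y-v_1^\top X)^2-(Y-v_2^\top X)^2+t)\in(0,1)$ just as the mean-estimation $g_1$ does, so $g_2$ and $\log(1-g_2)$ lie in the same bounded ranges and $|d_2|\le 1/2$; and the induced variables $Z$ attached to $T_1=\mathsf{sigmoid}$, $T_2=\mathsf{sigmoid}\circ T_1$, $T_3=\log\circ T_2$ depend only on $T$ and not on $f$, so they are the same sub-exponential variables, which lie in $\GG_{\mathsf{mean}}(\eta\log(1/\eta))$. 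Condition~2 follows by the same triangle-inequality argument as in Corollary~\ref{cor.mean} (take $w_1=1$, $w_{j>1}=0$ in $g_2$), now with $\sF=\{(Y-v_1^\top X)^2-(Y-v_2^\top X)^2\mid v_1,v_2\in\bR^d\}$.

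Next, the robustness-to-perturbation chain in the proof of Theorem~\ref{theorem.GAN} never touches the pseudonorm structure --- it only uses $\bar A\le\TV$, the optimality of the projection, and $p^\star\in\GG_{\mathsf{reg}}(\psi)$ --- so it applies unchanged and gives, for $q$ the projection under any of $A_1,A_2,A_3$, the bound $A_i(q,p^\star)-A_i(p^\star,p^\star)\le\tilde\epsilon$ with $\tilde\epsilon=2\epsilon+2\bar A_i(p,\hat p_n)$. Feeding this into Condition~2 yields $\sup_{f\in\sF,t\in\bR}|\bE_q[T(f(X)+t)]-\bE_{p^\star}[T(f(X)+t)]|\le C\tilde\epsilon$.

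The departure from Theorem~\ref{theorem.GAN} is the final step, which I would carry out by hand because $\sup_{f\in\sF}(\bE_{p^\star}[f]-\bE_q[f])$ over this unbounded $\sF$ is not the quantity we want (and need not be finite). Instead I would apply Lemma~\ref{lem.smoothed_KS} to the single scalar statistic $f^\star(X,Y)=(Y-\theta^\star(q)^\top X)^2-(Y-\theta^\star(p^\star)^\top X)^2\in\sF$, which is the function whose $p^\star$-mean is exactly the excess risk to be bounded and whose $q$-mean is $\le 0$ by optimality of $\theta^\star(q)$ for $q$. Lemma~\ref{lem.smoothed_KS} produces deletions $r_{p^\star}\le\frac{p^\star}{1-C|a|\tilde\epsilon}$, $r_q\le\frac{q}{1-C|a|\tilde\epsilon}$ with $\bE_{r_{p^\star}}[f^\star]\le\bE_{r_q}[f^\star]+2\rho_Z(C|a|\tilde\epsilon)$. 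Then $\bE_{p^\star}[f^\star]\le\bE_{r_{p^\star}}[f^\star]+|\bE_{p^\star}[f^\star]-\bE_{r_{p^\star}}[f^\star]|\le\bE_{r_q}[f^\star]+2\rho_Z(C|a|\tilde\epsilon)+|\bE_{p^\star}[f^\star]-\bE_{r_{p^\star}}[f^\star]|\le\bE_q[f^\star]+2\rho_Z(C|a|\tilde\epsilon)+|\bE_{p^\star}[f^\star]-\bE_{r_{p^\star}}[f^\star]|+|\bE_q[f^\star]-\bE_{r_q}[f^\star]|$, and I would drop $\bE_q[f^\star]\le 0$. The two deletion-deviation terms are controlled because $p^\star,q\in\GG_{\mathsf{reg}}(\psi)$: expanding $f^\star=-2Y(\theta^\star(q)-\theta^\star(p^\star))^\top X+((\theta^\star(q)^\top X)^2-(\theta^\star(p^\star)^\top X)^2)$ and splitting $Y$ into signal $X^\top\theta^\star$ plus noise reduces each term to a deletion-deviation of a quantity whose Orlicz norm is controlled by one of the two defining inequalities of~\eqref{eqn.G_linreg} --- exactly the modulus-of-continuity computation for $\GG_{\mathsf{reg}}(\psi)$ carried out in~\cite{zhu2019generalized} --- which bounds each by $O(\tilde\epsilon\,\psi^{-1}(1/\tilde\epsilon))$. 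Since $\rho_Z(\eta)=O(\eta\log(1/\eta))$ for the sub-exponential $Z$, these assemble into $\bE_{p^\star}[f^\star]\le C_1(\tilde\epsilon\,\psi^{-1}(1/\tilde\epsilon)+\tilde\epsilon\log(1/\tilde\epsilon))$.

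Finally I would supply the concentration $\bar A_i(\hat p_n,p)\le C(\sqrt{d/n}+\sqrt{\log(1/\delta)/n})$ with probability $1-\delta$, which turns $\tilde\epsilon$ into $C_2(\epsilon+\sqrt{d/n}+\sqrt{\log(1/\delta)/n})$ and completes the proof; this follows the empirical-process arguments of~\cite[Lemma 7.3]{gao2019generative} and~\cite[Lemma 8.1]{gao2018robust}, but now for discriminators that are bounded functions of the quadratic statistic $(Y-v_1^\top X)^2-(Y-v_2^\top X)^2$ rather than a linear one. I expect the main obstacles to be (i) that last concentration estimate, which requires controlling suprema of bounded functionals of a degree-two polynomial in $(X,Y)$ under only an Orlicz-norm assumption, and (ii) pinning down the $\GG_{\mathsf{reg}}(\psi)$ deletion-deviation bound with the correct dependence on $\|\theta^\star(q)-\theta^\star(p^\star)\|_2$ and on $\bE[(v^\top X)^2]$ so that the cross term and the quadratic term both collapse to $O(\tilde\epsilon\,\psi^{-1}(1/\tilde\epsilon))$; everything else is a rearrangement of Theorem~\ref{theorem.GAN}, Lemma~\ref{lem.smoothed_KS}, Corollary~\ref{cor.mean}, and imported facts about $\GG_{\mathsf{reg}}$ from~\cite{zhu2019generalized}.
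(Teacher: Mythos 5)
Your skeleton matches the paper's: reuse Theorem~\ref{theorem.GAN}'s two-part proof, verify the three conditions (which depend only on $T$ and the boundedness of $g_1$, not on the particular shape of $f\in\sF$), obtain $A_i(q,p^\star)-A_i(p^\star,p^\star)\le\tilde\epsilon$ from the perturbation chain, and feed the single scalar statistic $f^\star(X,Y)=(Y-\theta^\star(q)^\top X)^2-(Y-\theta^\star(p^\star)^\top X)^2$ into Lemma~\ref{lem.smoothed_KS} to get $\bE_{r_{p^\star}}[f^\star]\le\bE_{r_q}[f^\star]+2\rho_Z(C\tilde\epsilon)$. The divergence is entirely in how the last inequality is converted into a bound on $\bE_{p^\star}[f^\star]$. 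The paper never writes a naked deletion-deviation of $f^\star$: it first uses optimality of $\theta^\star(r_q)$ for $r_q$ to replace $\theta^\star(p^\star)$ with $\theta^\star(r_q)$ inside the $\bE_{r_q}$ expectation, so the remaining quantity $\bE_{r_q}[(Y-X^\top\theta^\star(q))^2-(Y-X^\top\theta^\star(r_q))^2]$ compares a distribution's optimum with its own deletion's optimum and is directly $\le C\tilde\epsilon\,\psi^{-1}(1/\tilde\epsilon)$ by~\cite[Lemma~F.3]{zhu2019generalized}; a second invocation of the same lemma then lifts the resulting bound on $\bE_{r_{p^\star}}[f^\star]$ up to $\bE_{p^\star}[f^\star]$.

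Your route instead drops $\bE_q[f^\star]\le 0$ (correct) and is left to control $|\bE_{p^\star}[f^\star]-\bE_{r_{p^\star}}[f^\star]|$ and $|\bE_q[f^\star]-\bE_{r_q}[f^\star]|$ by hand. That is where the real content lives and where the proposal is under-specified. Expanding $f^\star$ gives terms linear in $(\theta^\star(q)-\theta^\star(p^\star))^\top X$, so each deletion-deviation comes out proportional to a norm of $\theta^\star(q)-\theta^\star(p^\star)$, i.e.\ proportional to (roughly) the square root of the very excess risk you are trying to bound, not to a fixed constant from the definition of $\GG_{\mathsf{reg}}(\psi)$. Closing this requires a self-bounding/bootstrap argument in the unknown $\|\theta^\star(q)-\theta^\star(p^\star)\|$ --- that bootstrap is exactly what~\cite[Lemma~F.3]{zhu2019generalized} packages, and it is also why the paper substitutes $\theta^\star(r_q)$ for $\theta^\star(p^\star)$ before invoking it, so that every excess-risk comparison is between a distribution and its own deletion's optimizer. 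You flag this as your obstacle (ii), so the gap is acknowledged; but as written the proposal does not supply the bootstrap, and the clean fix is to adopt the paper's substitution trick plus two invocations of Lemma~F.3 rather than bounding the deletion-deviations of $f^\star$ directly.
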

\begin{remark}
Corollary~\ref{cor.reg} recovers~\cite[Theorem 3.3]{zhu2019generalized} up to an additive  term $\tilde\epsilon\log(1/\tilde\epsilon)$. When $X$ and $Y-X^\top\theta^\star(p^\star)$ are both sub-Gaussian, our dependence on $\epsilon$ is the same as the optimal rate in  Gaussian example in~\cite{gao2017robust} up to a log factor while the sample complexity matches~\cite{gao2017robust} exactly. 
\cite{diakonikolas2019efficient} guarantee an optimal parameter error $\|\hat \theta - \theta^*\|_2 \lesssim (\epsilon\log(1/\epsilon))^2$ given $O(d/\epsilon^2)$ samples when $X$ is isotropic Gaussian and $Z$ has bounded second moment, which is implied by our analysis by taking the Orlicz function $\psi$  as exponential function and use it as the generalized resilience set.
\end{remark}
\begin{proof}[Proof of Corollary~\ref{cor.reg}]
We slightly modify the last part of the proof in Theorem~\ref{theorem.GAN} for linear regression. 
Following the same proof as Theorem~\ref{theorem.GAN}, we know  for fixed $f = (Y-X^\top\theta^\star(q))^2 - (Y-X^\top\theta^\star(p^\star))^2$,  one can find deletions $r_{p^\star}, r_q$ which delete at most $\tilde\epsilon$-fraction of the distributions ${p^\star}, q$, such that
\begin{align}
\lefteqn{\mathbb{E}_{r_{p^\star}}[(Y-X^\top\theta^\star(q))^2 - (Y-X^\top\theta^\star(p^\star))^2]}\nonumber\\ 
&\leq \mathbb{E}_{r_{q}}[(Y-X^\top\theta^\star(q))^2 - (Y-X^\top\theta^\star(p^\star))^2]+2\rho_Z(C\tilde \epsilon)\nonumber  \\ 
&\leq  \mathbb{E}_{r_{q}}[(Y-X^\top\theta^\star(q))^2 - (Y-X^\top\theta^\star(r_q))^2] + 2\rho_Z(C\tilde \epsilon)\nonumber \\ 
&\leq  C\tilde \epsilon\psi^{-1}(1/\tilde\epsilon) + 2\rho_Z(C\tilde \epsilon).\label{eqn.G_up_reg}
\end{align}
Here the last inequality comes from~\cite[Lemma F.3]{zhu2019generalized}. Invoking ~\cite[Lemma F.3]{zhu2019generalized} again, we know that Equation~\eqref{eqn.G_up_reg} implies that 
\begin{align*}
    \lefteqn{\mathbb{E}_{{p^\star}}[(Y-X^\top\theta^\star(q))^2 - (Y-X^\top\theta^\star(p^\star))^2]} \\ 
    &\leq C_1(\tilde \epsilon\psi^{-1}(1/\tilde\epsilon) + \rho_Z(C_2\tilde \epsilon)).
\end{align*}
The rest of the proof follows directly from that of Corollary~\ref{cor.mean} by verifying the three assumptions in Theorem~\ref{theorem.GAN}.
\end{proof}

\newpage




\bibliographystyle{plain}
\bibliography{di}

\end{document}